\newtheorem{assumption}{Assumption}
\newtheorem{proposition}{Proposition}
\title{\LARGE \bf
Generalized Maximum Likelihood Estimation for Perspective-n-Point Problem
}
\author{Tian Zhan, Chunfeng Xu, Cheng Zhang* and Ke Zhu
\thanks{All authors are with the School of Aerospace Engineering, Beijing Institute of Technology, Beijing, China. {\tt\small $\{$zhantian, cfxu, zhangcheng, kezhu$\}$@bit.edu.cn}}%
\thanks{*Corresponding author.}
\thanks{Open source code can be found at \url{https://github.com/zhantian00/gmlpnp}}
}
\begin{document}

\maketitle
\thispagestyle{empty}
\pagestyle{empty}

\begin{abstract}
    The Perspective-n-Point (PnP) problem has been widely studied in the literature and applied in various vision-based pose estimation scenarios. However, existing methods ignore the anisotropy uncertainty of observations, as demonstrated in several real-world datasets in this paper. This oversight may lead to suboptimal and inaccurate estimation, particularly in the presence of noisy observations. To this end, we propose a generalized maximum likelihood PnP solver, named GMLPnP, that minimizes the determinant criterion by iterating the GLS procedure to estimate the pose and uncertainty simultaneously. Further, the proposed method is decoupled from the camera model. Results of synthetic and real experiments show that our method achieves better accuracy in common pose estimation scenarios, GMLPnP improves rotation/translation accuracy by $4.7\%$/$2.0\%$ on TUM-RGBD and $18.6\%$/$18.4\%$ on KITTI-360 dataset compared to the best baseline. It is more accurate under very noisy observations in a vision-based UAV localization task, outperforming the best baseline by $34.4\%$ in translation estimation accuracy.
\end{abstract}


\section{INTRODUCTION}
\label{sec:intro}

Perspective-n-Point (PnP) is a classic robotics and computer vision problem that aims to recover the 6-DoF pose given a set of $n$ 3D object points in the world frame and the corresponding 2D image projection points on a calibrated camera. It is critical in various vision and robotics applications, e.g., vision-based localization~\cite{ORBSLAM3_TRO, vinsmono}, 3D reconstruction~\cite{schoenberger2016sfm}, etc.

In the literature, most PnP solvers ignore \textbf{the anisotropy of observation noise}. In the context of vision-based pose estimation with respect to sparse features, existing works assume that the observation of object points and their projection on the image is accurate or with an isotropic Gaussian noise, i.e., with the covariance formed $\sigma^2\mathbf{I}$~\cite{cpnp, epnpu}. However, this may not hold in real-world data, as the observation of object points is derived from different sensors' measurements and techniques, along with the propagation of image point noise, resulting in anisotropic uncertainty. In general cases, the distribution of the noise may not be known in advance, thus how to estimate the observation uncertainty is an essential issue. Furthermore, there is a practical need for the generalization of PnP methods to cope with omnidirectional camera models. Most existing works build upon the perspective camera model (e.g., pinhole camera)~\cite{epnp, epnpu, sqpnp}, while the omnidirectional camera (e.g., fisheye camera) is often used in vision-based localization. Coupling the solver with the camera model restricts the application of these methods.

In this paper, we propose a generalized maximum likelihood PnP (GMLPnP) solver that considers the anisotropy of observation uncertainty. The term \textit{generalized} comes from two aspects: (1) generalized least squares and (2) generalization for the camera model. The contributions of this paper are:
\begin{enumerate}
    \item Show that many real-world data have the property of anisotropic uncertainty.
    \item Devise a novel PnP solver featuring:
        \begin{itemize}
            \item Its solution is statically optimal in the sense of maximum likelihood.
            \item It simultaneously estimates the distribution parameter of the observation uncertainty by iterated generalized least squares (GLS) procedure.
            \item The estimation is consistent, i.e., convergent in probability.
            \item The proposed PnP solver is decoupled from the camera model.
        \end{itemize}
    \item The proposed method is evaluated by experiments including synthetic and real-world data. An application in UAV localization by vision is shown in section \ref{sec:uav}, which is the original motivation behind our work.
\end{enumerate}

\section{RELATED WORKS}
In the past several decades, researchers have dedicated themselves to finding the optimal and more efficient solution for the PnP problem. Algorithms that depend on a fixed number of points~\cite{kneip2011p3p,bujnak2008p4p} are practically sub-optimal since they do not make full use of the information of all the observed points, and their stability under noisy measurements is limited. Among the non-iterative methods, the most well-known efficient solution is the EPnP~\cite{epnp}, which solves the least squares formulation based on principal component analysis. In DLS~\cite{dls}, a nonlinear object space error is minimized by the least squares.

Iterative methods usually provide better precision while yielding more computational cost. Classic Gauss-Newton refinement~\cite{hartley2003multiple}, or motion-only bundle adjustment (BA) in some literature~\cite{mur2015orbslam}, minimizes the reprojection error defined in the image plane and is often minimized on the manifold of $\mathrm{SO}(3)$ or $\mathrm{SE}(3)$, forming an unconstrant non-linear optimization problem~\cite{forstner2010minimal}. REPPnP~\cite{reppnp} includes algebraic outlier rejection that removes sequentially eliminating outliers exceeding some threshold. PPnP~\cite{ppnp} formulates an anisotropic orthogonal Procrustes problem. The error between the object and the reconstructed image points is minimized in a block relaxation~\cite{de1994block} scheme. SQPnP~\cite{sqpnp} obtains the global minimum from the regional minimum computed with a sequential quadratic programming scheme starting from several initials. CPnP~\cite{cpnp}, analyzes and subtracts the asymptotic bias of a closed-form least squares solution, resulting in a consistent estimate. A very recent EPnP-based work ACEPnP~\cite{sun2024efficient} integrates geometry constraints into control points formulation and reformulates the LS to quadratic constraints quadratic programming. In general, iteration methods yield better accuracy, but they require a good initialization to avoid trapping in a local minima. Hence, the combination of non-iterative initialization and iterative refinement is commonly used by the methods mentioned above.

Many solutions are only geometrically optimal since they do not consider the uncertainties of the observations. Methods that involve observation uncertainty have been explored by researchers recently and are most relevant to our work. MLPnP~\cite{urban2016mlpnp} is a maximum likelihood solution to the PnP problem incorporating image observation uncertainty, thus statistically optimal. CEPPnP~\cite{ceppnp} formulates the PnP problem as a maximum likelihood minimization approximated by an unconstrained Sampson error function that penalizes the noisy correspondences. EPnPU and DLSU~\cite{epnpu} are uncertainty-aware pose estimation methods based on EPnP and DLS, a modified motion-only BA is introduced to take 3D and 2D uncertainties into account. In the above methods that incorporate observation uncertainty, noise distribution is acquired by feature detector, triangulation sequence of images or prior knowledge of sensors. We argue that these prior may not be available in real-world applications, but the uncertainty can be estimated directly from the residuals of the error function.


In addition, formulation decoupled from the camera model is essential in many modern applications incorporating various camera models, e.g., the fisheye camera is widely used in visual SLAM~\cite{ORBSLAM3_TRO, vinsmono}. UPnP~\cite{upnp} is a linear, non-iterative unified method that is decoupled from the camera model and extends the solution to the NPnP (Non-Perspective-n-Point) problem. It solves the problem by a closed-form computation of all stationary points of the sum of squared object space errors. The DLS, MLPnP, and gOp~\cite{shree2001general} also include projection rays and formulate errors in object space, which can be solved by providing the unprojection function passing from pixels to projection rays. 

\section{Problem Formulation}
Given a set of observed object points $\mathbf{p}_i\in\mathbb{R}^3,i=1,\cdots,n$ and their corresponding projection $\mathbf{u}_i$ on image plane of an unknown camera frame, PnP is aimed to recover the motion, i.e., rotation $\mathbf{R}\in \mathrm{SO}(3)$ and translation $\mathbf{t}\in\mathbb{R}^3$. This paper considers general cases when the number of points $n$ is from dozens to hundreds in common vision-based localization tasks. Existing methods primarily aim to minimize the reprojection error defined on the normalized image plane, known as the \textit{gold standard}~\cite{hartley2003multiple}. This method is restricted to a perspective camera, whereas we are seeking a formulation that is decoupled from the camera model.

Considering the formulation in object space, define $\mathbf{m}_i\in\mathbb{R}^3$ as the unit projection ray with $\Vert\mathbf{m}_i\Vert=1$, and $s_i$ as the scale factor (depth) of the corresponding point. The geometry relation is given by
\begin{equation}
    \mathbf{p}_i = s_i\mathbf{R}\mathbf{m}_i + \mathbf{t} + \bm{\epsilon}_i,~i=1,\cdots,n
    \label{equ:model}
\end{equation}
where $\bm{\epsilon}_i$ is the disturbance term. The unit projection ray $\mathbf{m}_i = \bm{\pi}^{-1}(\mathbf{u}_i) / \Vert\bm{\pi}^{-1}(\mathbf{u}_i)\Vert$ is obtained by the image point $\mathbf{u}_i$ and inverse projection function $\bm{\pi}^{-1}$ of the camera, e.g., for pinhole camera $\mathbf{m}_i = \mathbf{K}^{-1}\mathbf{u}_i / \Vert\mathbf{K}^{-1}\mathbf{u}_i\Vert$ where $\mathbf{K}$ is the intrinsic matrix. This is a similar formulation used in~\cite{urban2016mlpnp, dls, schweighofer2008globally}, shown in Fig. \ref{fig:vis_pnp}.

\begin{figure}
    \centering
    \includegraphics[width=\linewidth]{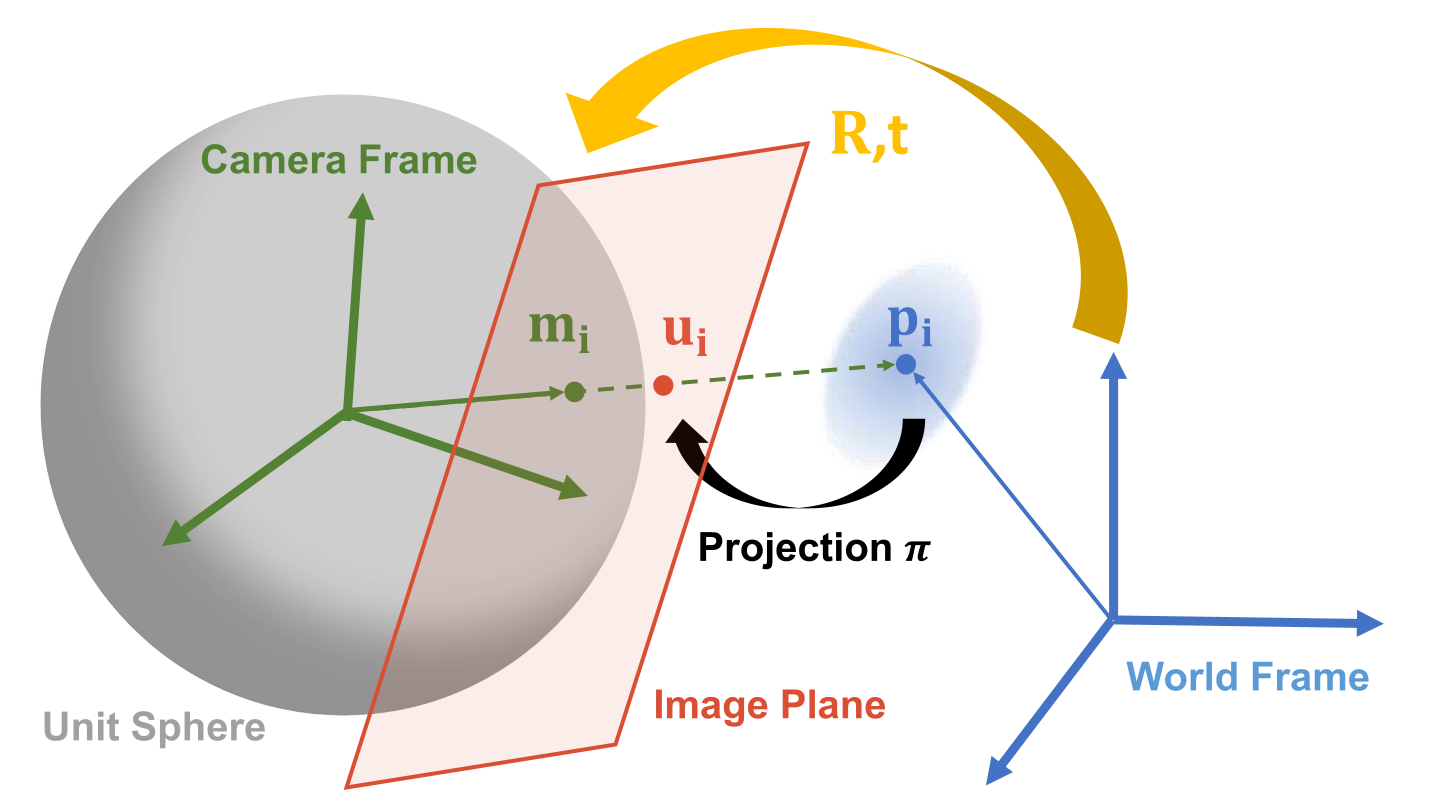}
    \caption{We formulate the model in object space with projection rays, which can cope with perspective and omnidirectional camera models. The blue ellipse cloud visualizes the uncertainty.}
    \label{fig:vis_pnp}
\end{figure}

For the noise $\bm{\epsilon}_i$, we have the following assumption:
\begin{assumption}
    The observation is corrupted by zero mean Gaussian noise $\bm{\epsilon}_i\sim\mathcal{N}(\mathbf{0}, \Sigma)$, and all observation are i.i.d. with covariance $\Sigma$ positive-definite.
    \label{asu:consistency1}
\end{assumption} 
In the literature~\cite{cpnp, epnpu}, the covariance matrix is often considered isotropic, i.e., $\Sigma=\sigma^2\mathbf{I}$, where $\mathbf{I}$ is the identity matrix and $\sigma$ is the standard deviation. We instead relax this assumption and argue the covariance can be anisotropic as in Assumption \ref{asu:consistency1}. The uncertainty of image points can be propagated into the object space~\cite{forstner2010minimal, urban2016mlpnp, epnpu}. The resulting random noise, represented by $\bm{\epsilon}_i$ in \eqref{equ:model}, is modelled by Assumption \ref{asu:consistency1}. It is reasonable to assume $\Sigma$ to be positive-definite because the covariance matrix is positive semidefinite and when $\vert\Sigma\vert=0$, the space spanned by observation noise collapses into a plane or a line, which is unlikely to happen.

\section{Method}
In this section, we first consider the maximum likelihood estimation of the PnP problem in object space when the parameter of noise distribution is known. Then, the method is generalized to simultaneously estimate the pose and infer the covariance of the noise distribution without prior knowledge, named GMLPnP. Finally, we discuss the consistency and convergence of the GMLPnP solver.

\subsection{Maximum Likelihood Estimation with Uncertainty Prior}
Given the model of \eqref{equ:model} and Assumption \ref{asu:consistency1}, we relax the scale constraint, treating each $s_i$ as a free parameter as in~\cite{dls}. When the noise uncertainty $\Sigma$ is known, we introduce maximum likelihood estimation as
\begin{proposition}
The maximum likelihood estimation of transformation $\mathbf{R,t}$ in object space is given by minimizing the error function
\begin{equation}
    \mathcal{E}^2 = \frac{1}{2}\sum_{i=1}^n \left\lVert\mathbf{p}_i - (s_i\mathbf{R}\mathbf{m}_i + \mathbf{t}) \right\rVert^2_\Sigma
    \label{equ:our_cost_func}
\end{equation}
where $\lVert\cdot\rVert_{\Sigma}$ is the Mahalanobis norm, and $\Sigma$ is the covariance matrix of the known noise distribution.
\end{proposition}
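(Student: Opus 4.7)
The plan is to derive the stated cost function directly from Assumption \ref{asu:consistency1} by writing down the joint likelihood and passing to the negative log-likelihood. Since the proposition treats $\Sigma$ as known and the scale factors $s_i$ as free parameters (as in \cite{dls}), the only parameters to estimate are $\mathbf{R}, \mathbf{t}, \{s_i\}$, and the derivation is a textbook MLE argument for multivariate Gaussian observations.

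First, I would solve \eqref{equ:model} for the noise term, obtaining $\bm{\epsilon}_i = \mathbf{p}_i - (s_i\mathbf{R}\mathbf{m}_i + \mathbf{t})$, so that $\mathbf{p}_i \mid \mathbf{R},\mathbf{t},s_i \sim \mathcal{N}(s_i\mathbf{R}\mathbf{m}_i + \mathbf{t},\Sigma)$. Using the i.i.d.\ property in Assumption \ref{asu:consistency1}, the joint density factorizes, and I would write the likelihood as
\begin{equation*}
    \mathcal{L}(\mathbf{R},\mathbf{t},\{s_i\}) = \prod_{i=1}^{n} \frac{1}{(2\pi)^{3/2}\vert\Sigma\vert^{1/2}} \exp\!\left(-\tfrac{1}{2}\lVert\mathbf{p}_i - (s_i\mathbf{R}\mathbf{m}_i+\mathbf{t})\rVert_\Sigma^{2}\right),
\end{equation*}
where the Mahalanobis norm is $\lVert\mathbf{x}\rVert_\Sigma^{2} = \mathbf{x}^\top\Sigma^{-1}\mathbf{x}$, well defined because $\Sigma$ is positive-definite by Assumption \ref{asu:consistency1}.

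Next, I would take the negative log-likelihood, yielding
\begin{equation*}
    -\log\mathcal{L} = \frac{n}{2}\log\left((2\pi)^{3}\vert\Sigma\vert\right) + \frac{1}{2}\sum_{i=1}^{n}\lVert\mathbf{p}_i - (s_i\mathbf{R}\mathbf{m}_i+\mathbf{t})\rVert_\Sigma^{2}.
\end{equation*}
Because $\Sigma$ is assumed known, the first term is a constant that does not depend on $\mathbf{R},\mathbf{t},\{s_i\}$, and can be dropped. Maximizing $\mathcal{L}$ is therefore equivalent to minimizing the remaining sum, which is precisely the cost $\mathcal{E}^{2}$ in \eqref{equ:our_cost_func}. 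This establishes the proposition.

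I do not expect any real obstacle here: the statement is essentially the standard identity ``Gaussian MLE equals weighted least squares.'' The only subtlety worth flagging in the write-up is the justification that the $\log\vert\Sigma\vert$ term may be dropped, which holds only because $\Sigma$ is treated as a \emph{known} prior in this proposition; this is precisely what will cease to be true in the next subsection, motivating the generalized procedure where the determinant criterion reappears.
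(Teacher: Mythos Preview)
Your proposal is correct and follows essentially the same route as the paper: write the joint Gaussian likelihood under Assumption~\ref{asu:consistency1}, pass to the log-likelihood, and observe that with $\Sigma$ known the only data-dependent term is the sum of Mahalanobis norms in \eqref{equ:our_cost_func}. Your closing remark about why the $\log\vert\Sigma\vert$ term may be dropped here but not in the next subsection is a nice touch that the paper leaves implicit.
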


\begin{proof}
Denote observations $\mathbf{Y} = \{\mathbf{p}_1,\cdots,\mathbf{p}_n\}$, parameters $\bm{\theta} = \{\mathbf{R,t}, s_1,\cdots, s_n\}$, and residual $\mathbf{e}_i = \mathbf{p}_i - (s_i\mathbf{R}\mathbf{m}_i + \mathbf{t})$, the joint distribution of $\mathbf{Y}$ is
\begin{equation}
    P(\mathbf{Y}|\bm{\theta}, \Sigma) = \prod_{i=1}^n \frac{1}{\sqrt{(2\pi)^3|\Sigma|}} \exp\left(-\frac{1}{2}\mathbf{e}_i^\top\Sigma^{-1}\mathbf{e}_i\right).
\end{equation}

With covariance $\Sigma$ known, the log-likelihood is given by
\begin{equation}
\begin{aligned}
    L(\bm{\theta}) &= \log\prod_{i=1}^n \frac{1}{\sqrt{(2\pi)^3|\Sigma|}} \exp\left(-\frac{1}{2}\mathbf{e}_i^\top\Sigma^{-1}\mathbf{e}_i\right) \\
    &= -\frac{n}{2}\log\left((2\pi)^3 |\Sigma|\right) + \sum_{i=1}^n -\frac{1}{2} \mathbf{e}_i^\top\Sigma^{-1}\mathbf{e}_i \\
    &\propto \sum_{i=1}^n -\frac{1}{2} \left(\mathbf{p}_i - (s_i\mathbf{R}\mathbf{m}_i + \mathbf{t})\right)^\top \Sigma^{-1} \left(\mathbf{p}_i - (s_i\mathbf{R}\mathbf{m}_i + \mathbf{t})\right) \\
    &= - \frac{1}{2}\sum_{i=1}^n \left\lVert\mathbf{p}_i - (s_i\mathbf{R}\mathbf{m}_i + \mathbf{t}) \right\rVert^2_\Sigma.
    \label{equ:loglikehood1}
\end{aligned}
\end{equation}
Thus minimizing \eqref{equ:our_cost_func} is equivalent to maximizing the log likelihood.
\end{proof}

The parameters in \eqref{equ:our_cost_func} are naturally divided into two blocks, namely the pose parameters $\mathbf{R},\mathbf{t}$ and the scale parameters $s_1,\cdots, s_n$. We can optimize these two blocks by block relaxation~\cite{de1994block}, where each group of variables is alternatively estimated while keeping others fixed. The rotation is often minimized on the manifold of $\mathrm{SO}(3)$~\cite{forstner2010minimal}, thus formulating an unconstrained nonlinear optimization problem. The Jacobians of $\mathcal{E}^2$ with respective to $\mathbf{R}\in\mathrm{SO}(3)$ and $\mathbf{t}\in\mathbb{R}^3$ is given by
\begin{equation}
    \frac{\partial\mathcal{E}^2}{\partial{\mathbf{R}}} = \sum_{i=1}^n \lfloor s_i\mathbf{R}\mathbf{m}_i\rfloor_\times\Sigma^{-1}\mathbf{e}_i, ~
    \frac{\partial\mathcal{E}^2}{\partial{\mathbf{t}}} = \sum_{i=1}^n -\Sigma^{-1}\mathbf{e}_i
\end{equation}
where $\lfloor\cdot\rfloor_\times$ is to take the skew-symmetric matrix of a vector.

For the optimization of scale, by setting the partial derivation of $\mathcal{E}^2$ w.r.t. $s_i$ to zero, we obtain
\begin{equation}
    s_i = \frac{(\mathbf{p}_i - \mathbf{t})^\top\Sigma^{-1}\mathbf{R}\mathbf{m}_i}{(\mathbf{R}\mathbf{m}_i)^\top\Sigma^{-1}\mathbf{R}\mathbf{m}_i}.
    \label{equ:scale}
\end{equation}

\subsection{Generalized Maximum Likelihood Estimation}

\begin{algorithm}[tb]
\centering
\caption{Generalized Maximum Likelihood PnP estimator}\label{alg:gmlpnp}
\begin{algorithmic}
    \Require \\
    Object points $\{\mathbf{p}_i\in\mathbb{R}^3\}_{i=1}^n$ \\
    Projection ray $\{\mathbf{m_i}\in\mathbb{R}^3\}_{i=1}^n$ \\

    \State Estimate an initial guess of $\hat{\mathbf{R}}^{(0)}, \hat{\mathbf{t}}^{(0)}$.
    \State $k\leftarrow 1$
    \Repeat
        \State $\hat{\Sigma}^{(k-1)} \leftarrow \frac{1}{n} \sum_{i=1}^n \mathbf{e}_i^{(k-1)}{\mathbf{e}_i^{(k-1)}}^\top$
        \For {$i \leftarrow 1 $ to $ n$}
        \State $$\hat{s}_i^{(k-1)}\leftarrow \frac{(\mathbf{p}_i - \hat{\mathbf{t}}^{(k-1)})(\hat{\Sigma}^{(k-1)})^{-1}\hat{\mathbf{R}}^{(k-1)}\mathbf{m}_i}{(\hat{\mathbf{R}}^{(k-1)}\mathbf{m}_i)^\top(\hat{\Sigma}^{(k-1)})^{-1}\hat{\mathbf{R}}^{(k-1)}\mathbf{m}_i}$$ 
        \EndFor
        \State $\hat{\mathbf{R}}^{(k)}, \hat{\mathbf{t}}^{(k)} \leftarrow \mathop{\arg\min}\limits_{\mathbf{R}, \mathbf{t}}\sum_{i=1}^n \left\lVert\mathbf{p}_i - (\hat{s}_i^{(k-1)}\mathbf{R}\mathbf{m}_i + \mathbf{t}) \right\rVert^2_{\hat{\Sigma}^{(k-1)}}$
        \For {$i \leftarrow 1 $ to $ n$}
        \State$\mathbf{e}_i^{(k-1)} \leftarrow \mathbf{p}_i - (\hat{s}_i^{(k-1)}\hat{\mathbf{R}}^{(k-1)}\mathbf{m}_i + \hat{\mathbf{t}}^{(k-1)})$
        \EndFor
        \State $k \leftarrow k + 1$
    \Until $|\hat\Sigma^{(k)} - \hat\Sigma^{(k-1)}| < \text{Threshold}$ 
    \State \Return $\hat{\mathbf{R}}, \hat{\mathbf{t}}$
\end{algorithmic}
\end{algorithm}

In more general cases, the uncertainty of the noise is unknown. Hence we develop a PnP solver by generalized nonlinear least squares estimator, which simultaneously estimates the pose and the covariance. We consider the case when there is a rough initial hypothesis of rotation and translation. In this section, we brought the idea introduced by~\cite{zellner1962, malinvaud1980statistical} to solve the regression for multiresponse data under the assumptions that the disturbance terms in different observations are uncorrelated, but the disturbance terms for different responses in the same observation have a fixed unknown covariance. This is called the \textit{iterated GLS procedure} in~\cite{seber2003nonlinear}.

Treating the covariance matrix as an unknown parameter, the log-likelihood of \eqref{equ:loglikehood1} becomes
\begin{equation}
\begin{aligned}
    L(\bm{\theta}, \Sigma) &\propto \frac{n}{2}\log|\Sigma^{-1}| + \sum_{i=1}^n -\frac{1}{2} \mathbf{e}_i^\top\Sigma^{-1}\mathbf{e}_i \\
    &\propto \log|\Sigma| + \frac{1}{n} \sum_{i=1}^n \mathbf{e}_i^\top\Sigma^{-1}\mathbf{e}_i
    \label{equ:generalized_loss}
\end{aligned}
\end{equation}

The minimization of \eqref{equ:generalized_loss} can be finished in two steps as described in~\cite{seber2003nonlinear}. We first fix $\bm{\theta}$ and minimize \eqref{equ:generalized_loss} w.r.t. to $\Sigma$ by setting the derivative to zero, providing the conditional estimation
\begin{equation}
    \hat{\Sigma}(\bm{\theta}) = \frac{1}{n}\sum_{i=1}^n \mathbf{e}_i\mathbf{e}_i^\top .
    \label{equ:estimate_sigma}
\end{equation}
The resulting value of $\hat{\Sigma}$ is then substituted back into \eqref{equ:generalized_loss}, and the resulting function of $\bm{\theta}$ is minimized with respect to $\bm{\theta}$, which is identical to minimizing \eqref{equ:our_cost_func}. This technique of first eliminating $\Sigma$ to obtain a function of $\bm{\theta}$ is called concentrating the likelihood~\cite{seber2003nonlinear}. We summarize the method in Algorithm \ref{alg:gmlpnp}. In practice, we found that two iterations are usually enough for convergence (as shown in Fig. \ref{fig:det_over_iter}), and the convergence threshold can be set as $1\times 10^{-5}$.

We introduce another interpretation of the minimization of \eqref{equ:generalized_loss}. When substitute \eqref{equ:estimate_sigma} into \eqref{equ:generalized_loss} gives the conditional log-likelihood
\begin{equation}
    L(\bm{\theta}, \hat{\Sigma}(\bm{\theta})) \propto -\log|\mathbf{V}(\bm{\theta})|, \mathrm{~where~} \mathbf{V}(\bm{\theta}) = \sum_{i=1}^n \mathbf{e}_i\mathbf{e}_i^\top
    \label{equ:determinate_criterion}
\end{equation}
which is equivalent to directly minimizing $|\mathbf{V}(\bm{\theta})|$. This is called \textit{the determinant criterion} in~\cite{bates1988nonlinear}. Geometrically, $|\mathbf{V}(\bm{\theta})|$ corresponds to the square of the volume of the parallelepiped spanned by the residual vectors. Minimizing the determinant corresponds to minimizing the volume enclosed by the residual vectors.

\subsection{Discussion on Consistency and Convergence}
To discuss the asymptotic properties of the estimation of GMLPnP,  consider our nonlinear model $\mathbf{f}(\mathbf{m};\bm{\theta}) = s\mathbf{R}\mathbf{m} + \mathbf{t}$. Define set $\mathcal{M}$ as the unit sphere in $\mathbb{R}^3$, so that $\mathbf{m}\in\mathcal{M}$. We rewrite the rotation as Lie algebra $\mathbf{R}=\log \lfloor\bm{\phi}\rfloor_\times$ where $\bm{\phi}\in\mathfrak{so}(3)$. Then, the parameter $\bm{\theta}=[\bm{\phi}^\top~ \mathbf{t}^\top]^\top$ is in a subset of $\mathbb{R}^{6}$ denoted as $\Theta$, note here the scale factor $s_i$ is omitted as it depends on $\bm{\theta}$ by \eqref{equ:scale}. Denote the superscript $(\cdot)^*$ as the true value. The generalized least squares estimator, is the value $\hat{\bm{\theta}}_n(\mathbf{S}_n)$ of $\bm{\theta}$ which minimizes
\begin{equation}
    \sum_{i=1}^n\left(\mathbf{p}_i - \mathbf{f}(\mathbf{m}_i;\bm{\theta})\right)^\top \mathbf{S}_n \left(\mathbf{p}_i - \mathbf{f}(\mathbf{m}_i;\bm{\theta})\right).
\end{equation}
where $\mathbf{S}_n$ is a positive definite matrix, called iterated GLS procedure, where in our case it is the inverse of the covariance matrix \eqref{equ:estimate_sigma}, 

We further make the following assumptions:
\begin{assumption}
    The function $\mathbf{f}(\mathbf{m}; \bm{\theta})$ is continuous on $\mathcal{M}\times\Theta$.
    \label{asu:consistency2}
\end{assumption}
\begin{assumption}
    $\mathcal{M}$ and $\Theta$ are closed, bounded (compact) subset of $\mathbb{R}^3$ and $\mathbb{R}^{6}$ respectively.
    \label{asu:consistency3}
\end{assumption}
Assumption \ref{asu:consistency3} is not a serious restriction, as most parameters are bounded by the physical constraints of the system being modeled~\cite{seber2003nonlinear}. In our case, it is reasonable to bound $\vert\bm{\phi}\vert\leq \pi$ which avoids ambiguities related to multiple angle representations of the same rotation. The translation $\mathbf{t}\in\mathbb{R}^3$ is bounded by the maximum measuring distance of the sensor (camera, LiDAR, etc.).

\begin{assumption}
    The observations $\mathbf{m}$ are such that $H_n(\mathbf{m})\rightarrow H(\mathbf{m})$, where $H_n(\mathbf{m})$ is the empirical distribution function and $H(\mathbf{m})$ is a distribution function.
    \label{asu:consistency4}
\end{assumption}
Assumption \ref{asu:consistency4} indicates that the sample points $\mathbf{m}_1,\cdots,\mathbf{m}_n$ is a random sample from some distribution with distribution function $H(\mathbf{m})$.

\begin{assumption}
    If $\mathbf{f}(\mathbf{m};\bm{\theta}) = \mathbf{f}(\mathbf{m};\bm{\theta}^*)$, then $\bm{\theta}=\bm{\theta}^*$.
    \label{asu:consistency5}
\end{assumption}
Assumption \ref{asu:consistency5} is satisfied under general conditions~\cite{seber2003nonlinear}.

It has been given by Malinvaud et al.~\cite{malinvaud1980statistical} that if Assumptions \ref{asu:consistency1} to \ref{asu:consistency5} hold, then the estimation $\hat{\bm{\theta}}_n(\mathbf{S}_n)$ and
\begin{equation}
    \hat{\Sigma}_n(\mathbf{S}_n) = \sum_{i=1}^n\left(\mathbf{p}_i - \mathbf{f}(\mathbf{m}_i;\hat{\bm{\theta}}_n(\mathbf{S}_n))\right) \left(\mathbf{p}_i - \mathbf{f}(\mathbf{m}_i;\hat{\bm{\theta}}_n(\mathbf{S}_n))\right)^\top
\end{equation}
are consistent estimators of the true value $\bm{\theta}^*$ and $\Sigma^*$ respectively, i.e.,
\begin{equation}
    \mathop{\mathrm{plim}}\limits_{n\rightarrow \infty} \hat{\bm{\theta}}_n = \bm{\theta}^*, ~ \mathop{\mathrm{plim}}\limits_{n\rightarrow \infty} \hat{\Sigma} = \Sigma^*.
\end{equation}

Furthermore, under these assumptions and certain conditions, Phillips et al.~\cite{Phillips1976THEIM} show that the iterated GLS procedure converges for large enough $n$ and the limit point is independent of the starting value of $\Sigma^{(0)}$.

\section{Experiments}
\label{sec:experiments}

We conduct synthetic and real data experiments to evaluate our method, GMLPnP, by comparing with iterative and non-iterative PnP solvers listed in Table~\ref{tab:pnp_algs}. All experiments are conducted via a desktop with Intel Core i7-9700F CPU. The GMLPnP algorithm is implemented in a graph optimization manner with $\mathbf{g^2o}$ \cite{g2o} library, and the initial guess is given by MLPnP~\cite{urban2016mlpnp}.

\begin{table}[htb]
  \caption{The PnP algorithms compared in experiments and their property. (\checkmark) depicts methods that are initialized with a non-iterative method and followed by an iterative refinement. \textit{BA} refers to the motion-only bundle adjustment. \textit{Uncertainty} indicates if the method incorporates uncertainty. \textit{Impl.} indicates the language implemented in our experiment.}
  \label{tab:pnp_algs}
  \centering
  \begin{tabular}{|c|c|c|c|c|}
    \hline
    Method & \thead{Iterative} & \thead{Uncertainty} & \thead{Camera} & Impl. \\
    \hline
    EPnP~\cite{epnp} & & & Perspective & C++ \\
    BA~\cite{hartley2003multiple} & \checkmark & & Perspective & C++ \\
    PPnP~\cite{ppnp} & \checkmark & & Perspective & C++ \\
    SQPnP~\cite{sqpnp} & \checkmark & & Perspective & C++ \\
    CPnP~\cite{cpnp} & (\checkmark) & & Perspective & C++ \\
    UPnP~\cite{upnp} & (\checkmark) & & Decoupled & C++\\
    MLPnP~\cite{urban2016mlpnp} & (\checkmark) & \checkmark & Decoupled & C++ \\
    REPPnP~\cite{reppnp} & \checkmark & & Perspective & MATLAB\\
    EPnPU~\cite{epnpu} & (\checkmark) & \checkmark & Perspective & MATLAB\\
    DLSU~\cite{epnpu} & (\checkmark) & \checkmark & Perspective & MATLAB\\
    GMLPnP(ours) & \checkmark & \checkmark & Decoupled & C++ \\
  \hline
  \end{tabular}
\end{table}

\subsection{Synthetic Experiments}
\label{sec:syn_exp}

\subsubsection{Experiment setup}
In the simulation, we assume a pinhole camera with a focal length of $800$ pixels, resolution $640\times480$ pixels, and principal point in the image center. A point cloud is randomly generated in front of the camera within the range of $[-2,2]\times[-2,2]\times[4,8]$ in the camera frame. The world frame's origin is set at the center of the point cloud under a random rotation to the camera frame. The 3D-2D correspondences are obtained by projecting the point cloud to the image plane. We disturb the observation of each object point with i.i.d. zero-mean Gaussian noise with anisotropic covariance $\Sigma=\mathbf{R}_\mathrm{o}\mathrm{diag}(\sigma^2, \sigma_1^2, \sigma_2^2)\mathbf{R}_\mathrm{o}^\top$, which are composed of randomly generated rotation $\mathbf{R}_\mathrm{o}$ and randomly drawn $\sigma_1, \sigma_2$ within the interval $(0, \sigma)$. The 2D projection noise is generated analogously and added to the image points. We show a reference by bringing a method GMLPnP$^*$ that minimizes~\eqref{equ:our_cost_func} with the true covariance matrix known and fixed. The methods that incorporate uncertainties, namely the MLPnP, EPnPU and DLSU, are fed with ground truth covariance. All the simulations are run 500 times, and the results are taken as the mean value. 


\subsubsection{Metrics}
Absolute rotation error and relative translation error~\cite{ceppnp, reppnp, upnp, cpnp} are used to evaluate estimation accuracy. The absolute rotation error is defined as $\mathrm{e}_{\mathrm{rot}} = \max_{k=1}^3\{\arccos(\mathbf{r}_{k,\mathrm{gt}}^\top \cdot \mathbf{r}_{k,\mathrm{est}}) \times 180/\pi\}$ in degrees, where $\mathbf{r}_{k,\mathrm{gt}}$ and $\mathbf{r}_{k,\mathrm{est}}$ are the $k$-th column of the ground truth rotation matrix $\mathbf{R}_{\mathrm{gt}}$ and PnP estimation $\mathbf{R}_{\mathrm{est}}$. The relative translation error is defined as $\mathrm{e}_{\mathrm{trans}}=\Vert\mathbf{t}_{\mathrm{gt}} - \mathbf{t}_{{\mathrm{est}}}\Vert / \Vert\mathbf{t}_{\mathrm{gt}}\Vert$, where $\mathbf{t}_{\mathrm{gt}}$ and $\mathbf{t}_{{\mathrm{est}}}$ are the ground truth and estimated translation vector. 

\subsubsection{Results}
Fig. \ref{fig:error_vs_num_of_points} shows the pose estimation error with respect to the number of points. The $\sigma$ is set to be $0.1$ meters for object point noise and $1$ pixel for the image point noise. Our methods achieve the best accuracy. The estimation accuracy with respect to noise level is shown in Fig.~\ref{fig:error_vs_noise}, the number of points is set to be $50$, and object point noise $\sigma$ varies from 0.02 to 0.5 meters, the corresponding image point noise varies from 0.2 pixels to 5 pixels accordingly. GMLPnP$^*$ reasonably achieves the highest precision with the prior knowledge of uncertainty. GMLPnP does not lag behind by too much due to the simultaneous estimation of pose and uncertainty and is more accurate than all other methods in both synthetic experiments above. As the noise level increases in Fig.~\ref{fig:error_vs_noise}, the accuracy of our method can still remain relatively high. We compare the computing time of the algorithms implemented in C++, as the MATLAB code is much more inefficient and not comparable, results are shown in Fig.~\ref{fig:time_vs_num_of_points}. The execution time of GMLPnP grows linearly and is fast enough for real-time use.

\begin{figure}[tb]
  \includegraphics[width=\linewidth]{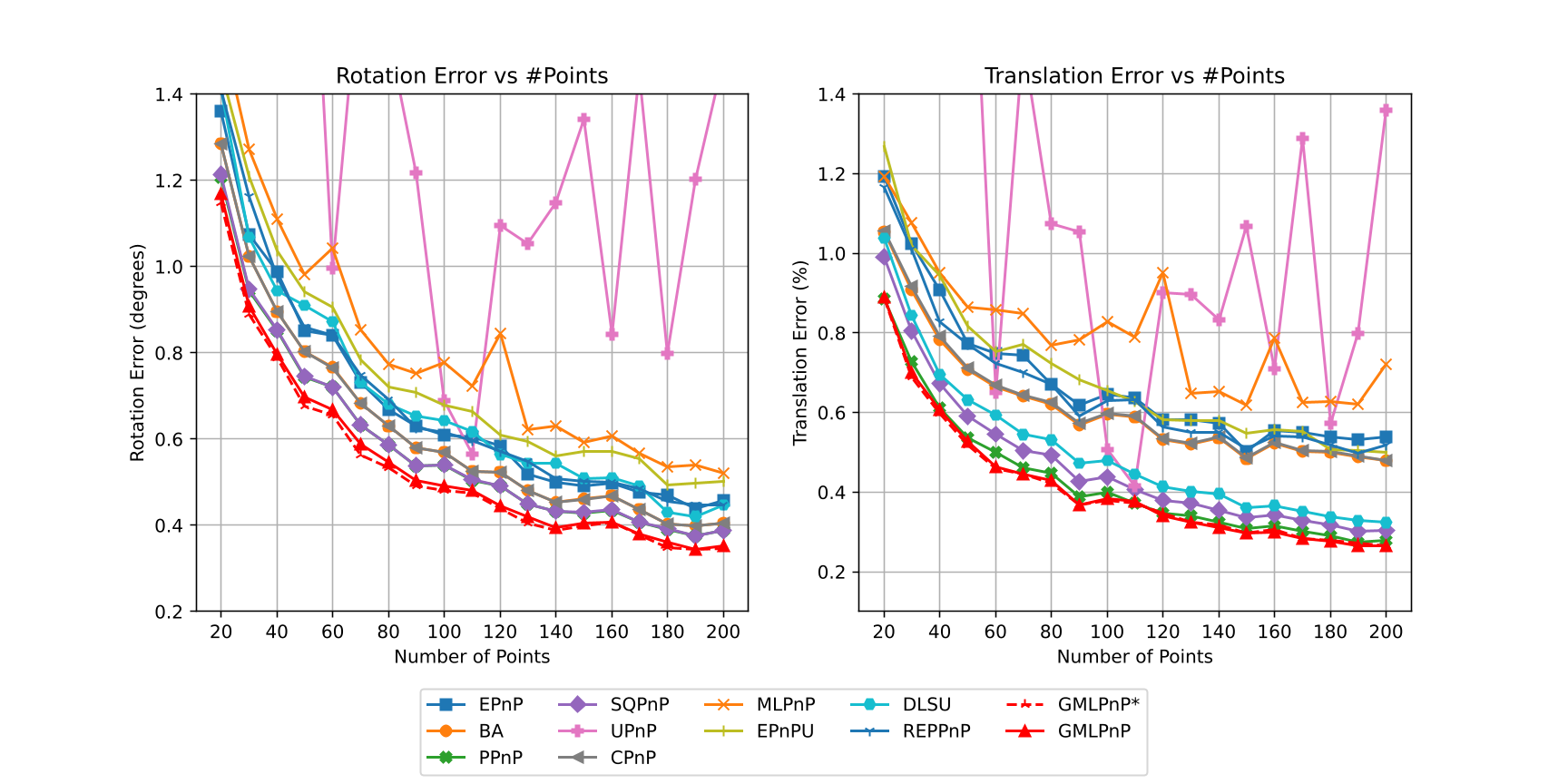}
  \caption{Estimation error vs num points, the number of points is from $20$ to $200$, with object point noise $0.1$ meters and image point noise standard deviation $1$ pixel.}
  \label{fig:error_vs_num_of_points}
\end{figure}

\begin{figure}[tb]
  \includegraphics[width=\linewidth]{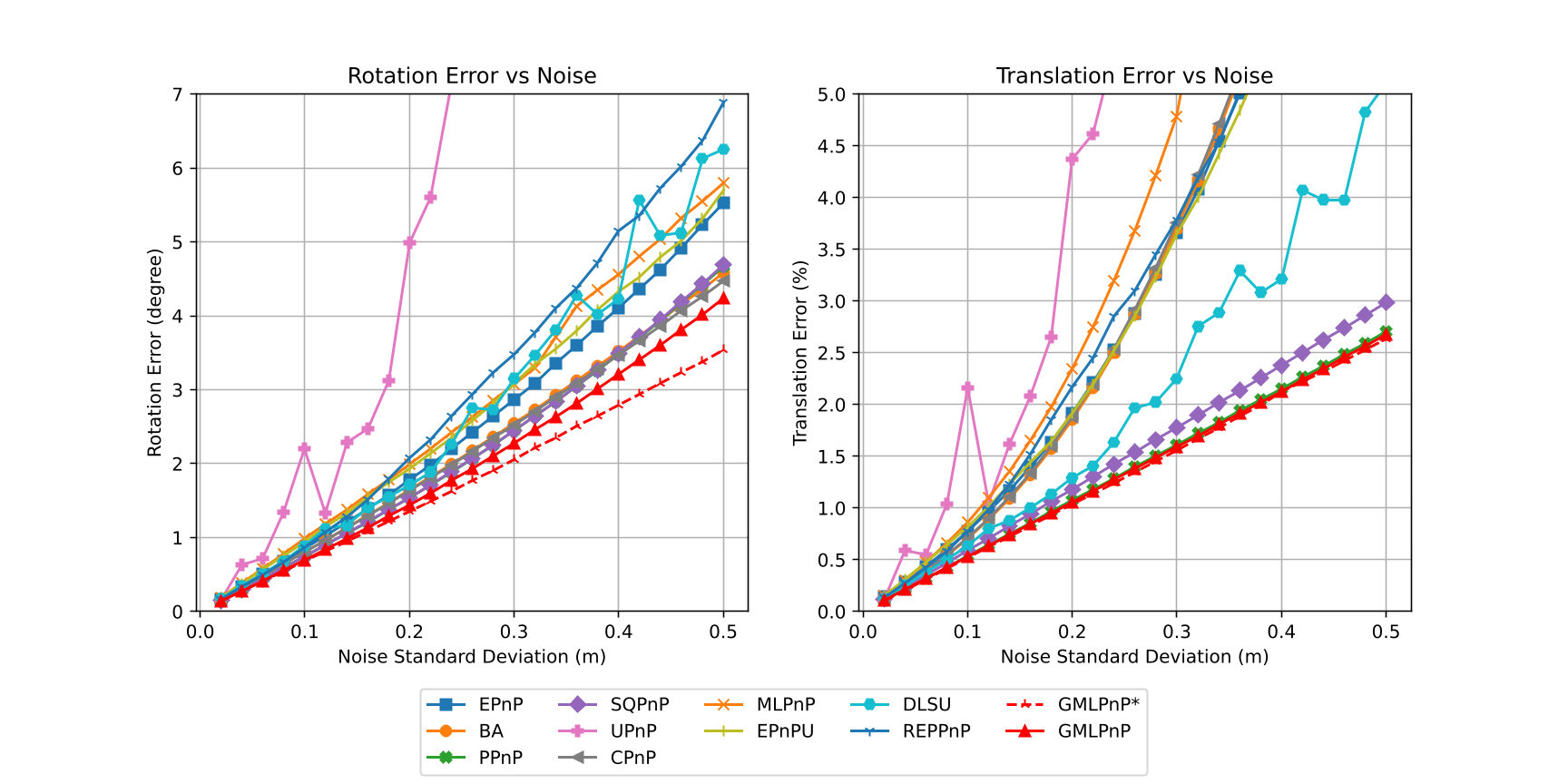}
  \caption{Estimation error vs noise standard deviation, the number of points is set to be $50$, and the object point noise increases from $0.02$ to $0.5$ meters, the corresponding image point noise varies from 0.2 pixels to 5 pixels accordingly.}
  \label{fig:error_vs_noise}
\end{figure}

\subsubsection{Convergence and residual analysis} 
We conduct a convergence and residual analysis experiment setup with 200 points and $\sigma=0.5$ to analyse the uncertainty estimation ability and convergence speed of GMLPnP. These simulations are performed 500 trials, and mean values are reported. Fig.~\ref{fig:det_over_iter} shows the determinate of the estimated covariance, which is minimized iteratively and matches~\eqref{equ:determinate_criterion}. Fig.~\ref{fig:fro_over_iter} shows the Frobenius norm of the difference between the estimated covariance and true covariance, i.e., $\Vert\hat{\Sigma} - \Sigma^*\Vert_F$. These results indicate that our algorithm typically converges in two iterations. Fig.~\ref{fig:synthetic_vis} visualizes a specific case in the synthetic experiment. Fig.~\ref{fig:synthetic_residual} plots the true noise added on the object point observations and the residuals when our algorithm converges, demonstrating that the residuals $\mathbf{e}_1,\cdots,\mathbf{e}_n$ may contain important information about the uncertainties that the previous works ignore. Fig.~\ref{fig:synthetic_cov_mat} shows the true and estimated covariance, which indicates our algorithm can recover the uncertainty well. As an iterative method, we test its robustness to the initialization value in Fig~\ref{fig:error_vs_init}. Results indicate that our method converges consistently under random initial offset and noisy observation.

\begin{figure}[tb]
  \centering
  \includegraphics[width=\linewidth]{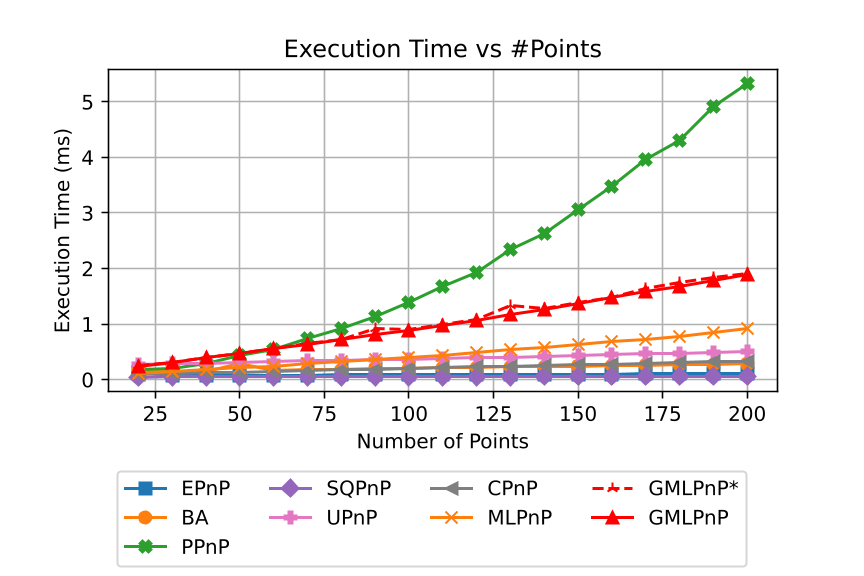}
  \caption{Execution time comparison, methods implemented by C++ are included.}
  \label{fig:time_vs_num_of_points}
\end{figure}

\begin{figure}[tb]
    \centering
    \includegraphics[width=\linewidth]{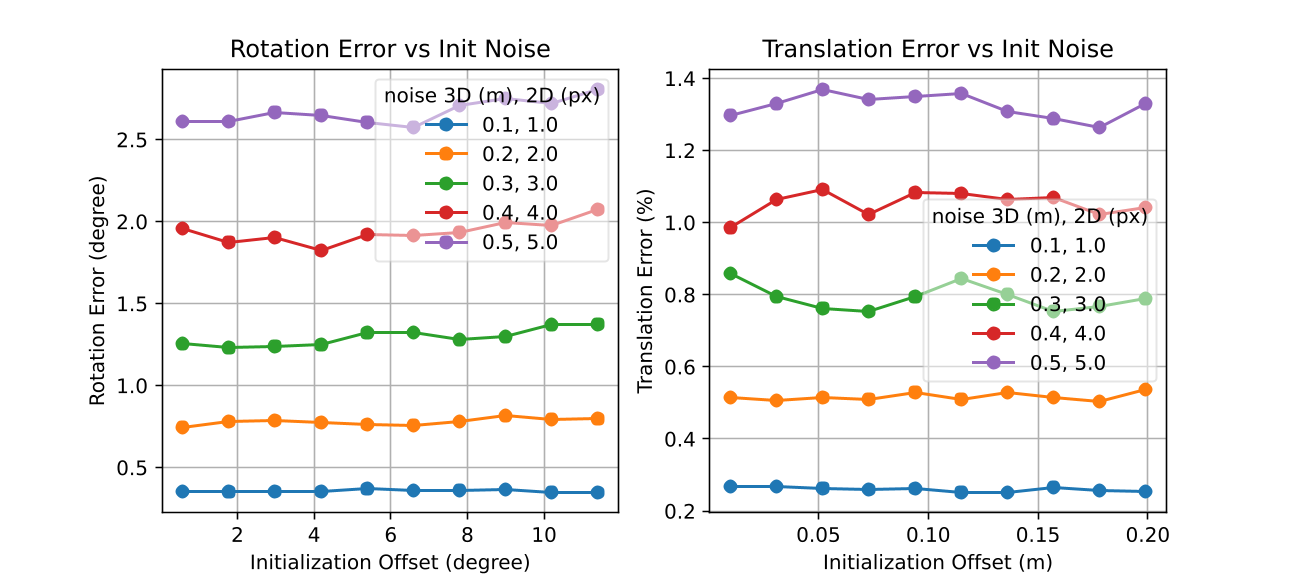}
    \caption{Initialize GMLPnP with ground truth + random offset (rotation and translation are added simultaneously) under different observation noise. The number of points is 200.}
    \label{fig:error_vs_init}
\end{figure}

\begin{figure}[tb]
\centering
\begin{minipage}[t]{0.45\linewidth}
    \centering
    \includegraphics[width=\linewidth]{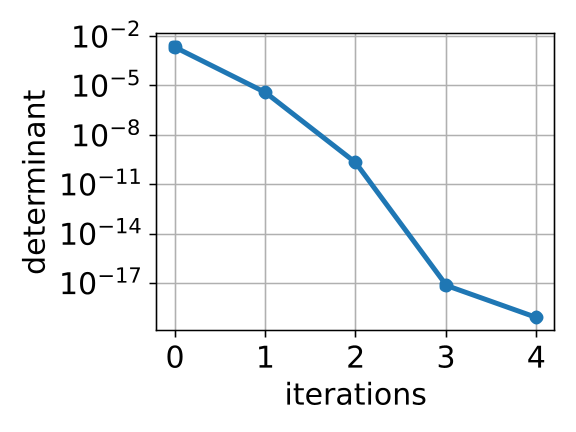}
    \caption{The determinate of estimated covariance is minimized. Iteration $0$ indicates the initial value.}
    \label{fig:det_over_iter}
\end{minipage}
\hspace{0.01\textwidth}
\begin{minipage}[t]{0.45\linewidth}
    \centering
    \includegraphics[width=\linewidth]{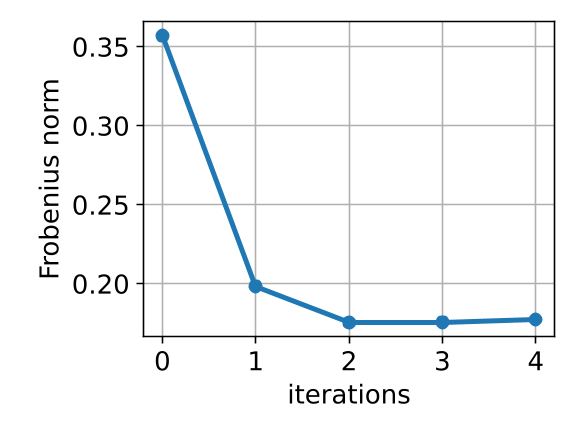}
    \caption{The Frobenius norm of the difference between estimated and true covariance.}
    \label{fig:fro_over_iter}
\end{minipage}
\end{figure}

\begin{figure}[tb]
  \centering
  \begin{subfigure}{\linewidth}
    \includegraphics[width=\linewidth]{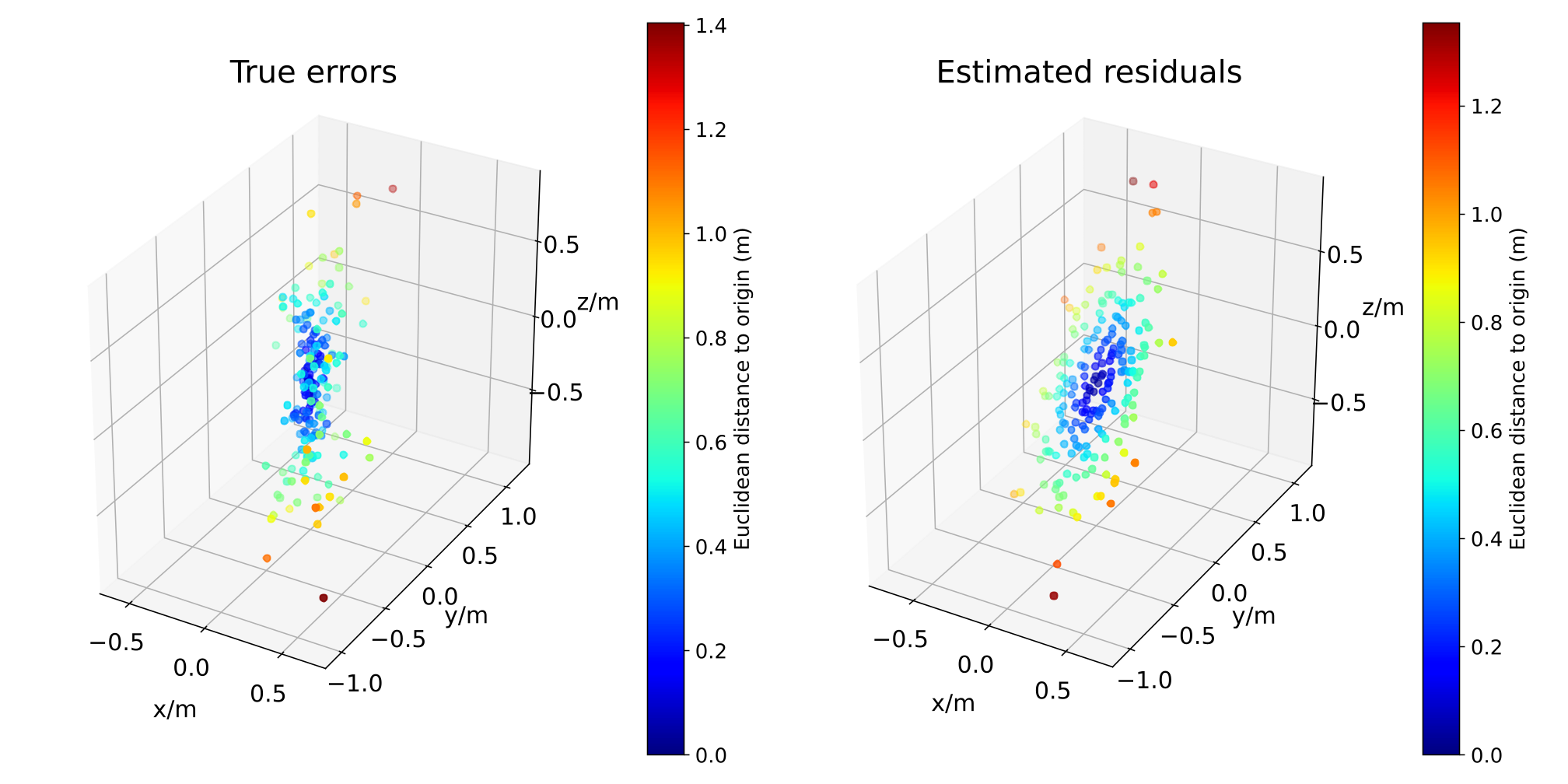}
    \caption{The residual points}
    \label{fig:synthetic_residual}
  \end{subfigure}
  \hfill
  \begin{subfigure}{\linewidth}
    \includegraphics[width=\linewidth]{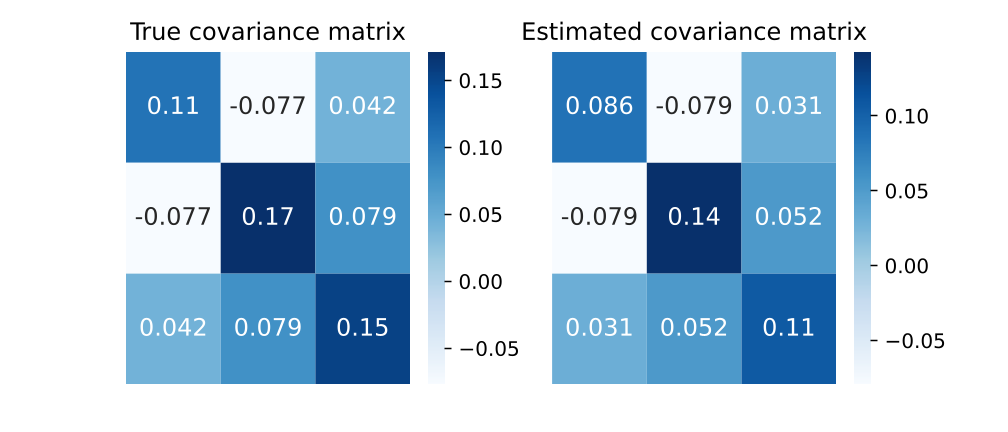}
    \caption{True and estimated covariance}
    \label{fig:synthetic_cov_mat}
  \end{subfigure}
  \caption{Visualization of uncertainty estimation.}
  \label{fig:synthetic_vis}
\end{figure}

\subsection{Motion Estimation with Real Data}
Two real-world datasets, TUM-RGBD \cite{sturm12iros} (with pinhole camera) and KITTI-360 \cite{Liao2022PAMI} (with fisheye camera), are used to evaluate the performance of our proposed method. We report the absolute rotation and translation error. 

\subsubsection{TUM-RGBD dataset} In the TUM-RGBD dataset, the images are recorded by a handheld or robot-mounted RGB-D camera, the RGB camera is a pinhole camera. We sample image pairs with a temporal interval of 0.1 seconds from the \textit{freiburg1} sequences of the dataset. A total of 1662 pairs are sampled. In each pair, the pixel correspondences are registered by ORB descriptor and are brute-force matched with RANSAC outlier rejection. We associate the temporal first image with its corresponding depth map to get the object point observations. Then relative motion between the two frames is estimated by PnP solvers. Note that since we have little prior knowledge about the observation uncertainty, we initiate it as isotropic, i.e., the covariance $\mathbf{I}$. The estimation errors are shown in Table~\ref{tab:tum_kitti}. GMLPnP achieves the best precision, promoting the best baseline by $4.7\%$ in rotation (compared with BA) and $2.0\%$ in translation (compared with SQPnP).

\begin{table}[tb]
  \caption{Motion estimation errors of TUM-RGBD dataset and KITTI-360 dataset with fisheye camera. \textbf{Bold} indicates the best item, \underline{underline} indicates the second best. }
  \label{tab:tum_kitti}
  \centering
\scalebox{1.1}{
  \begin{tabular}{|c|cc|cc|}
    \hline
    \multirow{2}*{~} & \multicolumn{2}{c|}{TUM-RGBD} & \multicolumn{2}{c|}{KITTI-360}\\

     & $\mathbf{R}$($^{\circ}$) & $\mathbf{t}$(cm) & $\mathbf{R}$($^{\circ}$) & $\mathbf{t}$(cm) \\
    \hline
    EPnP       & 1.532 & 1.872 & - & - \\ 
    BA         & \underline{0.915} & 1.728 & - &- \\ 
    PPnP       & 1.060 & 2.334 & - & - \\ 
    SQPnP      & 1.294 & \underline{1.712} & - & - \\ 
    UPnP       & 0.992 & 1.843 & 0.105 & 1.365 \\ 
    CPnP       & 0.974 & 1.849 & - & - \\ 
    MLPnP      & 1.173 & 1.938 & \underline{0.086} & \underline{1.200} \\ 
    EPnPU      & 1.461 & 1.771 & - & -  \\ 
    DLSU       & 0.994 & 1.880 & - & -  \\ 
    REPPnP     & 1.885 & 2.357 & - & -  \\ 
    GMLPnP(ours)     & \textbf{0.872} & \textbf{1.677} & \textbf{0.070} & \textbf{0.979} \\ 
  \hline
  \end{tabular}
}
\end{table}

\subsubsection{KITTI-360 dataset} In the KITTI-360 dataset, a ground vehicle is mounted with omnidirectional fisheye cameras (camera 3, calibrated with MEI model \cite{mei2007single}) along with LiDAR. Analogous to the TUM-RGBD setup, we sample image pairs in a sequence captured by a monocular camera with a temporal interval of 3 consecutive frames. The LiDAR points captured along with the first frame are projected on the corresponding frame and tracked on the second frame with Lucas-Kanade Optical Flow. We sample from sequences 00 to 10, a total of 2138 pairs are used to evaluate. Only methods decoupled from camera model are evaluated. The results are shown in Table~\ref{tab:tum_kitti}. GMLPnP generalizes well on the omnidirectional camera, outperforming the best baseline MLPnP by $18.6\%$ and $18.4\%$ in rotation and translation error respectively.

\subsubsection{Anisotropic uncertainties of real-world data}\label{seq:ani_uncer}
We show the anisotropic uncertainties of real-world data in Fig.~\ref{fig:real_data_residual_analysis} by plotting the residuals and covariance matrix. The object points observed in the first frame are transformed to the second frame by ground truth relative motion to obtain the depth of these points in the second frame. Then, the object points in the second frame are reconstructed by the relative motion, projection rays and the corresponding depths. We get the object point residuals by the observation in the first frame and reconstruction in the second frame. The resulting residuals and covariance show an obvious anisotropic property in these RGB-D and RGB-LiDAR data.

\begin{figure}[tb]
  \centering
  \begin{subfigure}{0.45\linewidth}
    \includegraphics[width=\linewidth]{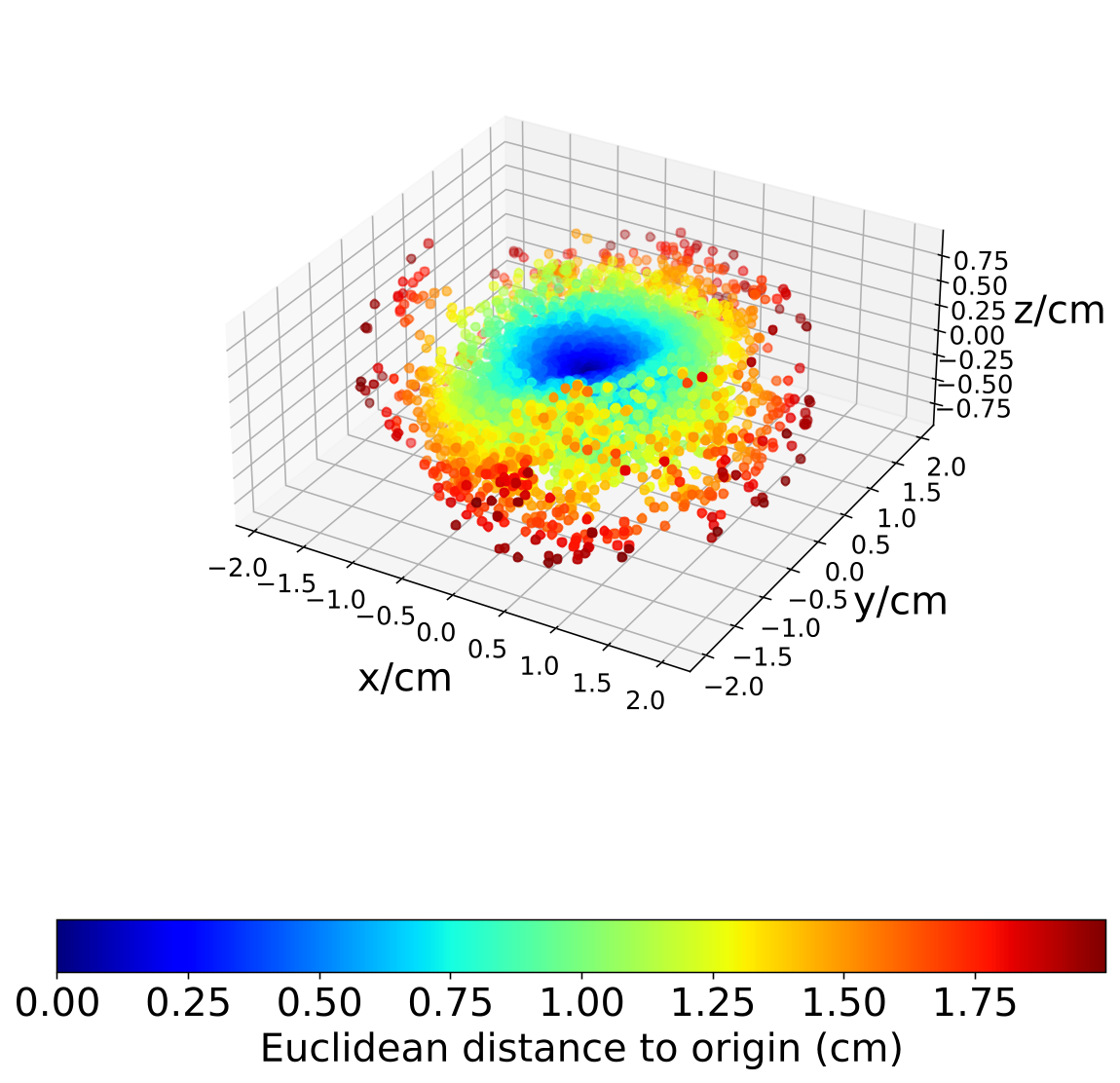}
    \caption{TUM-RGBD residuals}
    \label{fig:tum_residual}
  \end{subfigure}
  \hfill
  \begin{subfigure}{0.45\linewidth}
    \includegraphics[width=\linewidth]{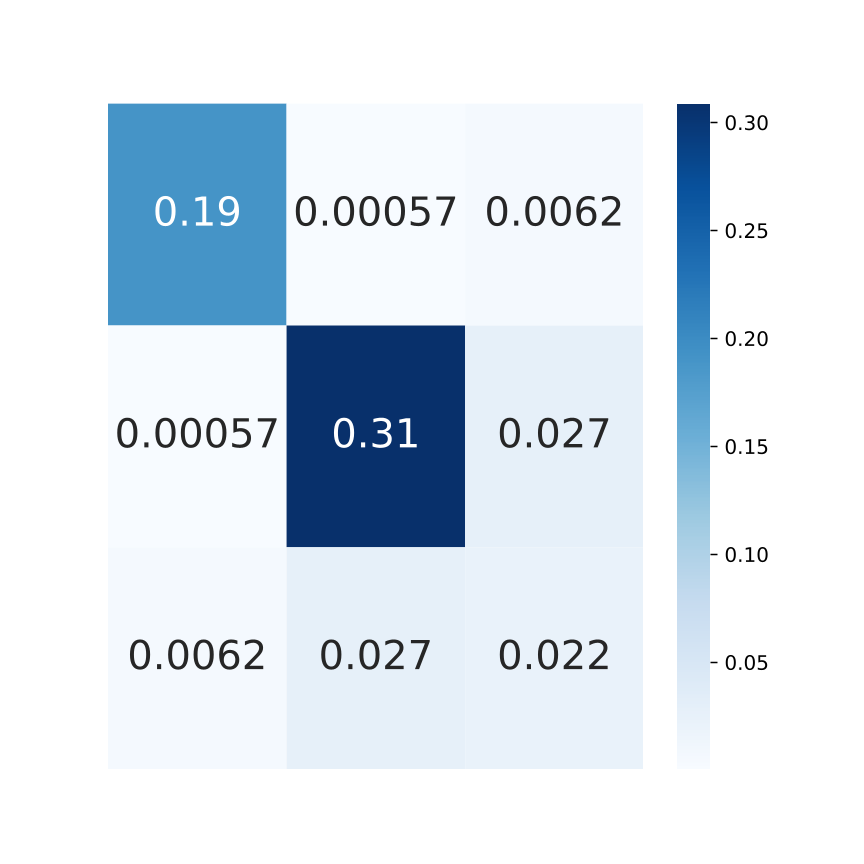}
    \caption{TUM-RGBD covariance}
    \label{fig:tum_cov_mat}
  \end{subfigure}
\hfill
  \begin{subfigure}{0.45\linewidth}
    \includegraphics[width=\linewidth]{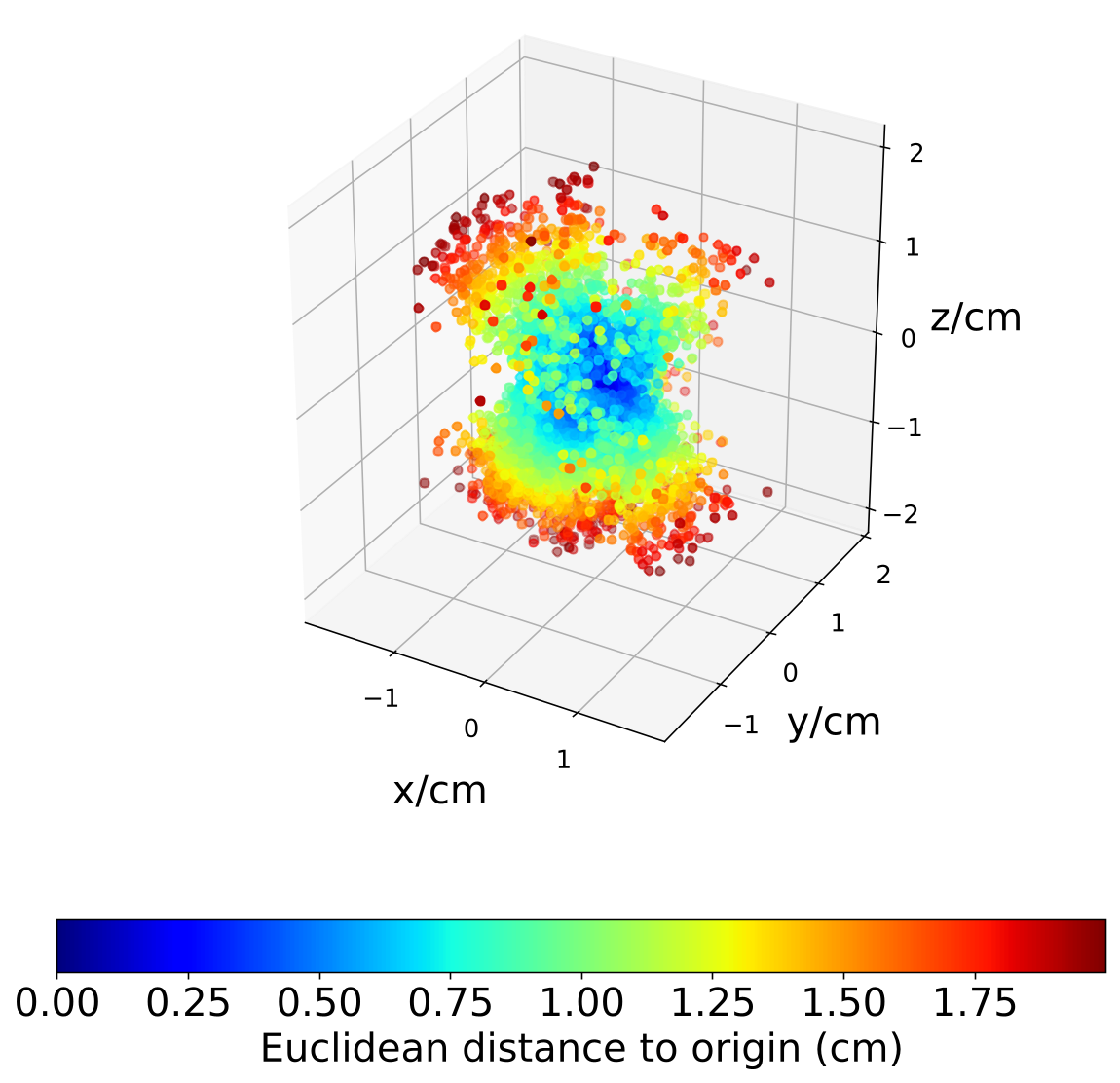}
    \caption{KITTI-360 residuals}
    \label{fig:kitti_residual}
  \end{subfigure}
  \hfill
  \begin{subfigure}{0.45\linewidth}
    \includegraphics[width=\linewidth]{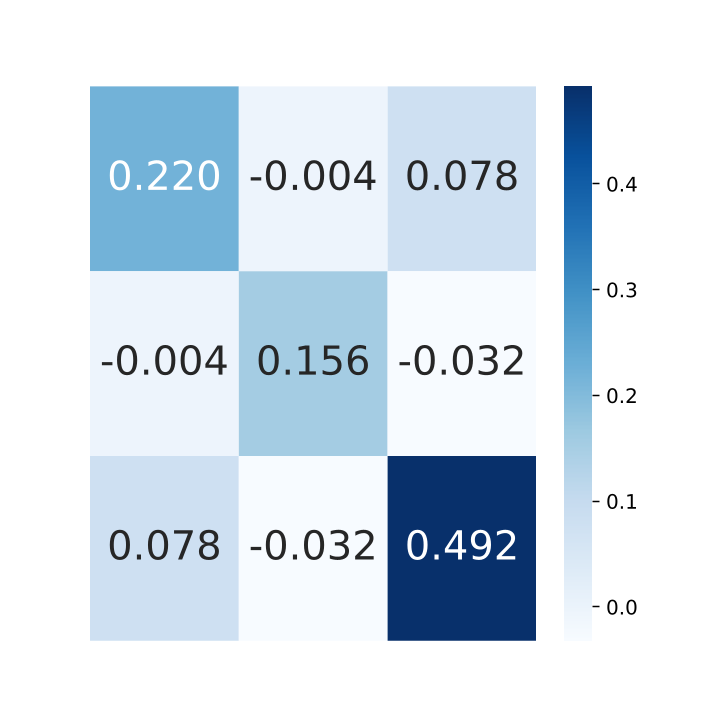}
    \caption{KITTI-360 covariance}
    \label{fig:kitti_cov_mat}
  \end{subfigure}
  \caption{Uncertainty visualization of TUM-RGBD and KITTI-360 datasets show an obvious anisotropic property.}
  \label{fig:real_data_residual_analysis}
\end{figure}

\subsection{Application in Vision-based UAV localization}
\label{sec:uav}

\begin{figure*}[tb]
  \centering
  \begin{subfigure}{0.58\linewidth}
      \includegraphics[width=\linewidth]{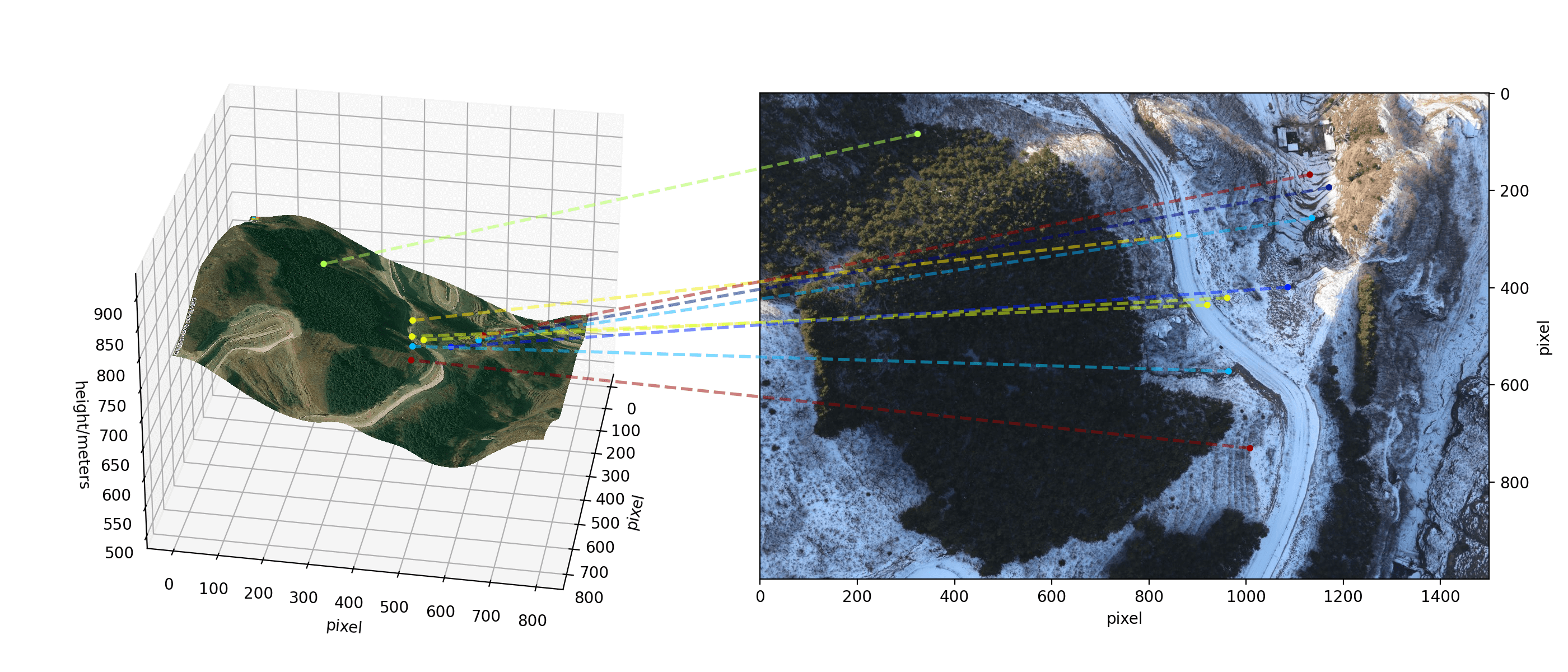}
      \caption{UAV localization as a PnP problem.}
      \label{fig:uav_localization}
  \end{subfigure}
  \begin{subfigure}{0.2\linewidth}
      \includegraphics[width=\linewidth]{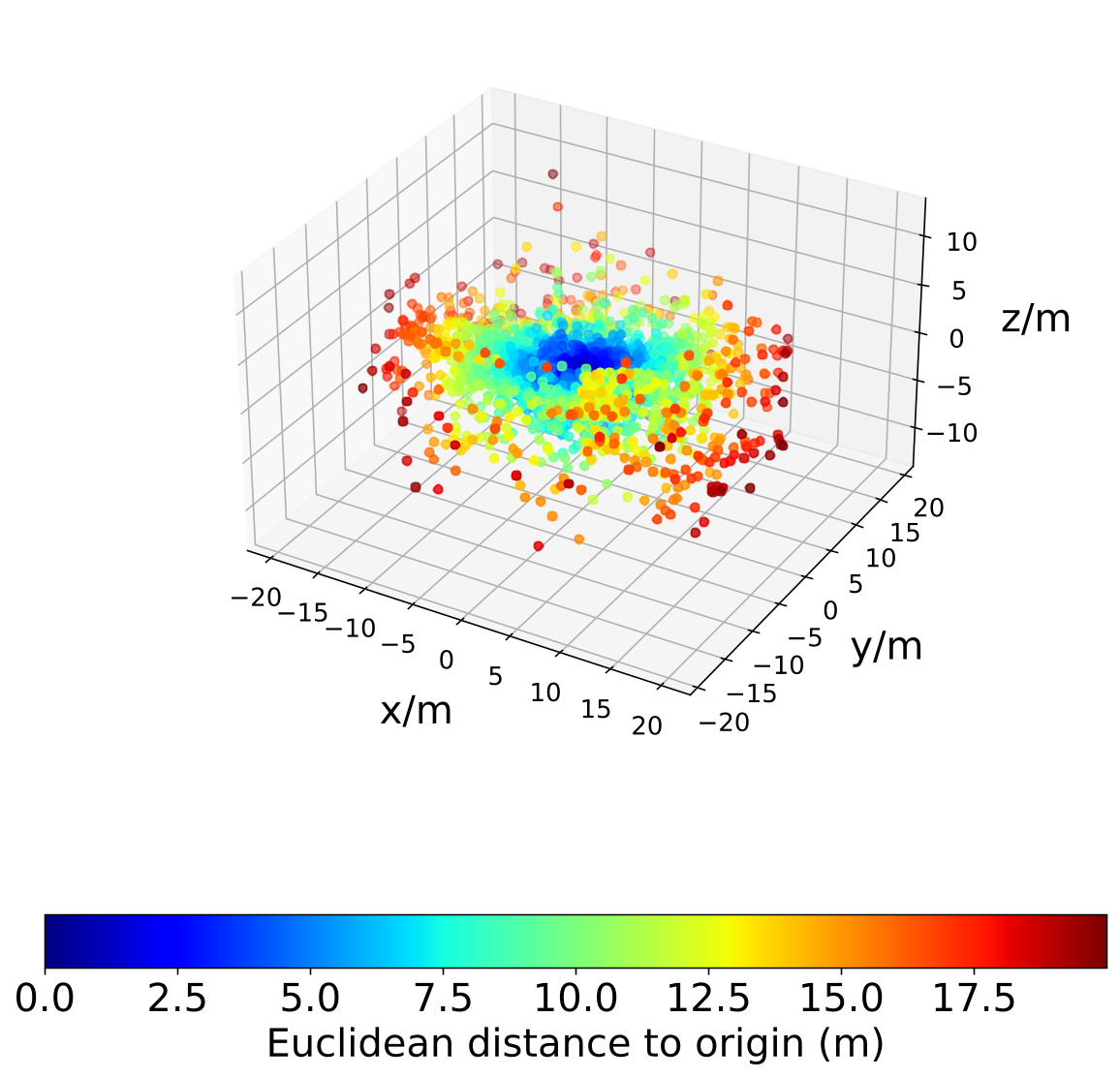}
      \caption{Residual points}
      \label{fig:uavlm_residual}
  \end{subfigure}
  \begin{subfigure}{0.2\linewidth}
      \includegraphics[width=\linewidth]{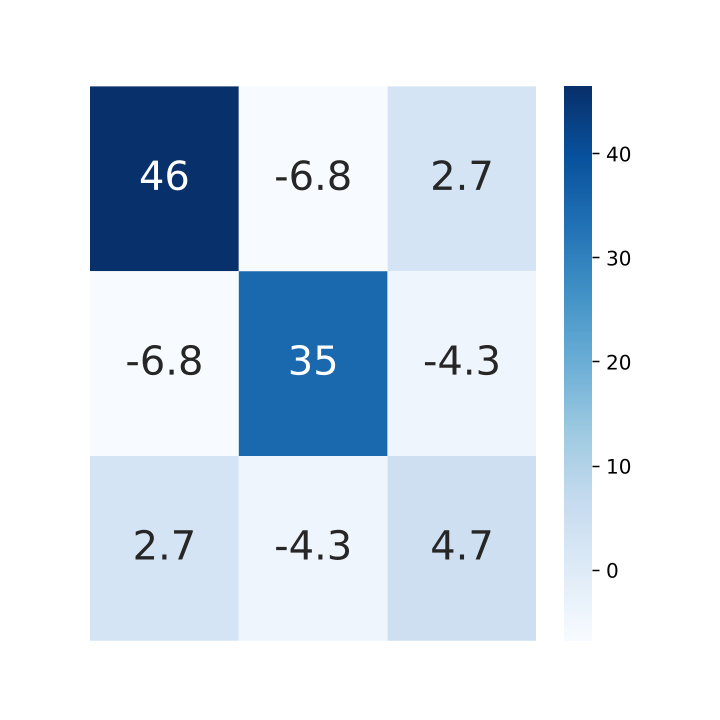}
      \caption{Covariance}
      \label{fig:uavlm_cov_mat}
  \end{subfigure}
  \caption{Introduced vision-based UAV localization is essentially a PnP problem by matching the 3D points in the satellite images and 2D points in UAV recorded images. We only show a few point correspondences out of dozens for demonstration. The satellite image on the left of (a) is provided by Bing Maps and is lifted by DEM to obtain the elevation. The right image of (a) is shot using the UAV's onboard camera. The uncertainty of the data is shown in (b) and (c).}
  \label{fig:uav}
\end{figure*}

Vision-based localization with a monocular camera is a promising navigation technique that is low-cost, low-weight and low-power. It can be complementary when GNSS fails~\cite{8793558, 8794228, 9196606, 9830867}. This section presents a vision-based localization method relying on geo-tagged satellite images and a digital elevation model (DEM), as demonstrated in Fig.~\ref{fig:uav_localization}. We assume the reference satellite image is first retrieved by geo-localization technique~\cite{8794228, 9360609}. By combining the geo-tag of the satellite image and the DEM model, we observe each pixel’s longitude, latitude and elevation, as shown on the left in Fig.~\ref{fig:uav_localization}. Then, cross-domain image registration is applied between the satellite image and the onboard camera recorded image (right in Fig.~\ref{fig:uav_localization}) by utilizing the state-of-the-art feature extractor SuperPoint~\cite{superpoint} and matcher LightGlue~\cite{lindenberger2023lightglue}. Finally, this UAV visual localization problem is essentially concluded as a PnP problem. Note that the precision of the DEM (providing the elevation) and the satellite images' geo-tag (providing the geodetic position) are inconsistent since they are measured by different techniques, resulting in the uncertainty of the observation anisotropic.

\subsubsection{Data Collection and Preparation}
The data was recorded by a fixed-wing UAV's onboard sensors on a flight in Fangshan, Beijing, on January 4th 2024, at 200 to 400 meters above the ground, covering a distance of $27.2$ kilometers. Images were collected by a downward-facing SLR camera mounted strap-down on the belly of the aircraft. 196 images are recorded with resolution $8688\times 5792$ pixels and down-sampled to $1500\times 1000$ pixels. Ground truth poses are obtained by a set of GNSS-INS navigation devices. Reference satellite images are collected using the Bing Maps API, with a ground resolution of around 0.4 meters per pixel. The DEM data is from Copernicus DEM GLO-30 with a 30-meter resolution (4 meters vertical accuracy) and interpolated to align with the satellite image. 

\subsubsection{Results} 
The localization errors by different methods are shown in Table~\ref{tab:uav}, among which GMLPnP achieves the best results, especially for the estimation of the elevation. The overall translation is more accurate than the best baseline PPnP by $29.7\%$, particularly $34.4\%$ in elevation. Some methods fail to localize due to the noisy observation, while our method are robust to provide more accurate results. The uncertainty of the data is also shown in Fig.~\ref{fig:uavlm_residual} and Fig.~\ref{fig:uavlm_cov_mat} with the same setup in section~\ref{seq:ani_uncer}. This experiment presents a method for the localization of UAV which is promising to be the complementary or replacement of onboard GNSS system in the event of a noisy or unreliable GNSS signal.

\begin{table}[tb]
  \caption{Vision-based UAV localization results, reported in absolute errors.}
  \label{tab:uav}
  \centering
  \begin{tabular}{|c|ccc|ccc|}
    \hline
     & \multicolumn{3}{c|}{rotation (degrees)} & \multicolumn{3}{c|}{translation (meters)} \\ 
     & roll & pitch & yaw & east & north & elevation \\ 
    \hline
    EPnP & 13.6 & 17.8 & 20.3 & 8.6 & 9.5 & 28.0\\ 
    BA & 11.6 & 14.0 & 15.8 & \underline{6.7} & 9.1 & 28.3 \\ 
    PPnP & \underline{11.6} & \underline{14.0} & 
    \underline{15.7} & \textbf{6.6} & \underline{9.1} & \underline{27.9} \\ 
    SQPnP & 12.0 & 16.0 & 17.5 & 7.8 & 8.5 & 27.4 \\ 
    UPnP & 33.8 & 35.8 & 40.0 & 16.8 & 17.2 & 93.3 \\ 
    CPnP & 48.1 & 56.4 & 94.6 & 13.7 & 12.9 & 761.5 \\ 
    MLPnP & 95.4 & 94.1 & 29.3 & 11.1 & 13.1 & 120.2 \\ 
    EPnPU & 38.3 & 40.4 & 20.1 & 8.9 & 10.0 & 135.5 \\ 
    DLSU & 25.0 & 33.4 & 40.9 & 14.3 & 19.8 & 52.6 \\ 
    REPPnP & 31.4 & 26.8 & 33.7 & 11.7 & 11.4 & 83.0 \\ 
    GMLPnP(ours) & \textbf{11.0} & \textbf{12.9} & \textbf{14.6} & 7.2 & \textbf{7.8} & \textbf{18.3} \\
  \hline
  \end{tabular}
\end{table}

\section{Conclusion}

In this paper, we propose GMLPnP, a generalized maximum likelihood PnP solver under the discovery of anisotropy uncertainty in many real-world data. The proposed method incorporates the anisotropic uncertainty of feature points into the PnP problem. Experimental results demonstrate increased accuracy in both synthetic and real data. This work is motivated initially by the vision-based localization of UAVs and is meant to explore a more accurate estimation under circumstances where the observations are very noisy. However, our work only considers the case of one central camera and supposes an initial guess is available for pose estimation. Methods for multiple camera systems with both central and non-central cameras are needed for further exploration.

\bibliographystyle{IEEEtran}
\bibliography{ref}

\end{document}